%% file: main.tex
\documentclass{article}
\usepackage[preprint]{neurips_2026}
\usepackage{amsmath,amssymb,amsthm,graphicx,booktabs,url,xcolor,float, enumitem}
\usepackage[hypertexnames=false]{hyperref}

\newtheorem{proposition}{Proposition}

\DeclareMathOperator*{\argmax}{arg\,max}
\title{Do Modules Stay in Their Lane?\\Role Drift in Compound LLM Systems}

\author{%
  \begin{minipage}[t]{0.48\textwidth}
    \centering
    \textbf{Xiaoyang Cao} \\
    \mdseries
    Massachusetts Institute of Technology \\
    \texttt{xycao@mit.edu}
  \end{minipage}%
  \hfill
  \begin{minipage}[t]{0.48\textwidth}
    \centering
    \textbf{Siddarth Srinivasan} \\
    \mdseries
    Harvard University \\
    \texttt{ssrinivasan@seas.harvard.edu}
  \end{minipage} \\[4em] 
  \begin{minipage}[t]{0.48\textwidth}
    \centering
    \textbf{Michiel A. Bakker} \\
    \mdseries
    Massachusetts Institute of Technology \\
    \texttt{bakker@mit.edu}
  \end{minipage}
}

\begin{document}
\maketitle

\begin{abstract}
\input{sections/00_abstract}
\end{abstract}

\input{sections/01_intro}
\input{sections/02_related}
\input{sections/03_theory}
\input{sections/04_experiments}
\input{sections/05_results}
\input{sections/06_conclusion}

\bibliographystyle{plainnat}
\bibliography{references}

\appendix
\renewcommand{\thefigure}{A\arabic{figure}}
\renewcommand{\thetable}{A\arabic{table}}
\setcounter{figure}{0}
\setcounter{table}{0}
\input{sections/90_appendix}


\end{document}

%% file: sections/00_abstract.tex
End-to-end reinforcement learning can improve the accuracy of compound LLM systems, but it does not constrain how modules divide labor internally. We identify \emph{Role Drift}, a failure mode in which modules preserve or improve end-task performance while deviating from their assigned roles through role-violating shortcuts that remain invisible to system-level evaluation. To make role drift observable and controllable, we propose \emph{Role Anchor}, a regularizer that modulates how much each module deviates from its assigned role during end-to-end training. The key idea is to preserve how the role prompt shifts the module's next-token predictions relative to a neutral prompt, which serves as a proxy for the role's intended effect during training. Experiments on two compound LLM pipelines reveal role drift that accuracy alone fails to detect: a decomposer meant to split a question into sub-questions for a separate solver instead plants the answer in them, and a reader meant to answer from retrieved passages instead falls back on parametric memory. In fact, on the decomposer pipeline this shortcut drives most of the apparent RL gain: 86\% of it vanishes once the decomposer is held to its role, indicating that terminal accuracy alone can badly overstate how much a compound system has genuinely learned. Across both pipelines, Role Anchor mitigates role drift at a tunable accuracy cost that varies by pipeline and anchor strength. Additional gradient analysis suggests that the regularizer reduces alignment with the role-drift direction rather than simply suppressing learning.

%% file: sections/01_intro.tex
\section{Introduction}
\label{sec:intro}

Compound LLM systems divide a task among specialized modules: a retriever and a reader, a planner and an executor, a decomposer and a solver. This division of labor is not just an engineering convenience; it encodes the designer's intent about \emph{what each module should do}. The reader should answer from retrieved evidence rather than from memory, so the system stays grounded when the corpus is updated or the question falls outside the model's parametric knowledge. The decomposer should break a problem into sub-questions rather than solve it directly, so that solvers can work in parallel and the reasoning remains auditable. Each module carries an implicit \emph{role utility} that the designer chooses deliberately when partitioning the system.

End-to-end reinforcement learning optimizes these pipelines against a single terminal reward. Terminal accuracy consistently improves, and the community has largely treated this as validation that the system is working as intended~\citep{khattab2024dspy,yuksekgonul2024textgrad,wang2025sysdpo}. But terminal reward measures only whether the final answer is correct; it is agnostic to which module performed which computation. When a module can improve the terminal reward by violating its assigned role, outcome-only RL can find and exploit that shortcut, because the role utility is nowhere in the objective.

\begin{figure*}[t]
\centering
\includegraphics[width=\linewidth]{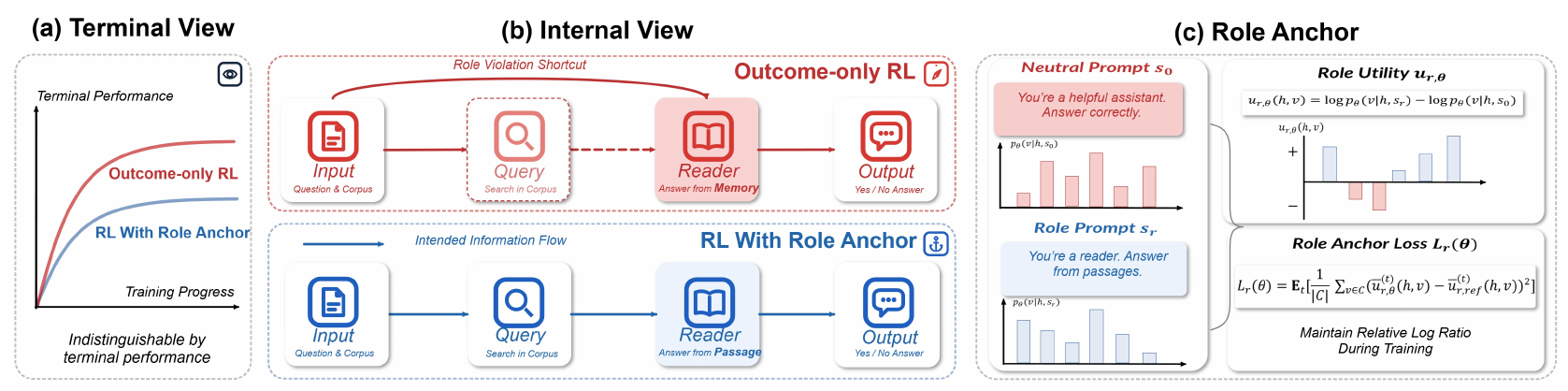}
\caption{\textbf{Same terminal success, different computation.} (a)~Outcome-only RL and RL with Role Anchor are indistinguishable by terminal performance. (b)~Internally, outcome-only RL drifts to a role-violating shortcut: the Reader answers from parametric memory and the retrieval path becomes irrelevant, while Role Anchor preserves the intended information flow. (c)~Role Anchor reads the role utility $u_{r,\theta}$ off the contrast between the module's predictions under the role and neutral prompts (Eq.~\ref{eq:tilt}) and penalizes changes in its centered value relative to the pre-RL reference (Eq.~\ref{eq:anchor}).}
\label{fig:teaser}
\end{figure*}

We call this failure \emph{Role Drift}: a module's effective behavior diverges from its assigned role while the system's terminal accuracy is preserved or improves. We study this behavior through two experiments. First, in a multihop reasoning task, the compound system consists of a Decomposer and a Solver. The Decomposer receives the question and supporting passages, but its role is only to split the problem into abstract sub-questions; the Solver is supposed to answer them. Under outcome-only RL, terminal accuracy rises, but the Decomposer increasingly plants answer entities directly in its sub-questions. Once we constrain it to stay within its role, 86\% of the apparent gain disappears, showing that most of the improvement came from the Decomposer doing the Solver's job. Second, in a retrieval task, a query generator and a frozen retriever provide passages to a Reader, whose role is to answer from the retrieved evidence. Under RL, accuracy again improves, but the Reader becomes less responsive to the passage and falls back on parametric memory instead.

A natural question is why drift should be prevented at all if it improves accuracy. The answer is that deployment requires more than a correct final answer. A drifted Reader is accurate only while its parametric memory matches the corpus; the moment the corpus is updated or a question falls outside what it memorized, the grounding that retrieval exists for is gone. A drifted Decomposer defeats the division of labor itself: once it plants answers in its sub-questions, the work is no longer split, so it cannot be parallelized across solvers, delegated to cheaper ones, or audited step by step. In both cases the system keeps its score while losing the properties it was built for (see Section~\ref{sec:discussion} for a detailed discussion).

We propose \textbf{Role Anchor}, which puts the role back into the objective: it measures how the role prompt shifts a module's predictions relative to a neutral prompt, and penalizes departures of that shift from its pre-RL value. The penalty is a single added term in the policy gradient and leaves the architecture unchanged (Section~\ref{sec:theory}).

\paragraph{Contributions.}
\begin{itemize}[leftmargin=*, topsep=4pt, itemsep=4pt, parsep=0pt]
\item \textbf{Phenomenon:} We identify and name \emph{Role Drift}, a failure mode in which end-to-end RL improves terminal accuracy while modules abandon their assigned roles. We show that much of what looks like learning can in fact be drift, invisible to any terminal-accuracy metric.
\item \textbf{Method:} We propose \textbf{Role Anchor}, a lightweight regularizer that makes role drift both measurable and controllable by anchoring each module's role-prompt effect to its pre-RL behavior.
\item \textbf{Mechanism:} Through gradient analysis, we show that Role Anchor curbs drift by redirecting updates away from the role-violating direction, not by suppressing learning.
\end{itemize}

%% file: sections/02_related.tex
\section{Related Work}
\label{sec:related}

\paragraph{Compound-system optimization.}
Recent compound-AI optimizers treat an LLM pipeline as a program or graph whose prompts, demonstrations, and component policies can be optimized against an end-task metric: DSPy~\citep{khattab2024dspy}, TextGrad~\citep{yuksekgonul2024textgrad}, AFlow~\citep{zhang2024aflow}, Trace/OPTO~\citep{cheng2024trace}, SysDPO~\citep{wang2025sysdpo}, OptiMAS~\citep{wu2025optimas}. These methods supply our optimization context but do not encode module-level role fidelity: a module can change what job it performs if the final metric improves. Role Drift is a failure mode this leaves open.

\paragraph{Underspecification and shortcut learning.}
Underspecification means many models can match validation performance via different internal mechanisms~\citep{damour2020underspecification}. Role Drift shares this spirit at the compound-system level: because terminal reward depends only on the final output, it does not constrain how modules divide labor internally, leaving room for role allocations that differ from the designer's intent. In this sense, Role Drift can be viewed as a module finding a shortcut to improve terminal reward that bypasses its assigned role~\citep{geirhos2020shortcut}, though the shortcut arises from the multi-module structure rather than from spurious data correlations.

\paragraph{Reward hacking.}
Reward hacking~\citep{gao2023scaling,skalse2022defining} arises when a proxy reward has exploitable gaps relative to the true objective. Role Drift is its compositional form in multi-module systems: the terminal reward scores whether the system succeeds, but says nothing about how the modules should divide the work, so a module can drop its assigned role while the final answer stays correct. This is why Role Drift survives even under verifiable rewards (RLVR), which are usually credited with curbing reward hacking~\citep{lambert2024tulu, guo2025deepseek}: the verifier checks the final answer, not which module produced it, so the failure hides inside the pipeline. Since terminal accuracy cannot expose it, we measure Role Drift directly with a role-fidelity probe. The same blind spot is well known in single-model reasoning, where process reward models~\citep{lightman2023lets,uesato2022solving} score intermediate steps instead of only the final answer. Role Anchor brings this to compound systems, deriving its component-level signal from the role-prompt contrast rather than from human-labeled steps.

\paragraph{Inference-time role and faithfulness degradation.}
Even without training, LLMs can fail to maintain assigned behavior.
\citet{liu2023lost} show that models neglect evidence from the middle of long contexts;
\citet{laban2025lost} find a $39\%$ performance drop in multi-turn versus single-turn settings,
with models making premature assumptions and failing to recover;
and \citet{abdulhai2025personas} document persona drift in role-playing dialogue,
where LLMs contradict their assigned persona over successive turns.
Sycophancy~\citep{sharma2023sycophancy} similarly causes models to abandon correct answers
under conversational pressure.
These findings characterize role and faithfulness degradation as inference-time phenomena.
Our work identifies an analogous failure induced by \emph{training}:
outcome-only RL actively pushes modules away from their assigned roles
because the reward is agnostic to internal role allocation.
Role Anchor provides a training-time mitigation
complementary to inference-time interventions.

\paragraph{Context distillation.}
Role Anchor is closest in spirit to context distillation~\citep{snell2022learning,askell2021general}, which transfers a prompted teacher's distribution into model weights offline and drops the prompt at inference. Constitutional AI~\citep{bai2022constitutional} trains with fixed principles via self-critique; prompt internalization~\citep{choi2022prompt,shin2024generative} transfers fixed prompts offline. RLHF and preference objectives regularize against a reference~\citep{ziegler2019fine,ouyang2022training,rafailov2023direct,ethayarajh2024kto} without preserving how the role prompt changes predictions relative to a neutral prompt. The key distinction is what gets preserved: context distillation absorbs the prompt into weights (destroying the role-vs-neutral contrast), while Role Anchor preserves this contrast online during RL (Section~\ref{sec:theory}).

%% file: sections/03_theory.tex
\section{Role Drift and Role Anchor}
\label{sec:theory}

We model a compound LLM system as a directed graph of modules with assigned roles ($V_R$). Each module $i\in V_R$ emits a next-token distribution $p_\theta(v\mid h,s)$, where $v\in\mathcal{V}$ is a candidate next token from the vocabulary $\mathcal{V}$ and $h$ is the generation history; the role prompt $s_r^{(i)}$ specifies the module's intended behavior, and the neutral prompt $s_0^{(i)}$ strips it.

\paragraph{Role utility.}
The effect of a role prompt can be read off the model itself: the same parameters produce different next-token distributions under $s_r$ and $s_0$, and the gap between the two is what the role contributes. We define the \textbf{role utility} of module $i$ as this gap in log space,
\begin{equation}
u_{r,\theta}(h,v) \;=\; \log p_\theta(v\mid h,s_r)\;-\;\log p_\theta(v\mid h,s_0).
\label{eq:tilt}
\end{equation}
The role utility is a score for how the role prompt moves each token: $u_{r,\theta}(h,v)>0$ means the role prompt makes $v$ more likely than it would be under the neutral prompt, and $u_{r,\theta}(h,v)<0$ means less likely. Equivalently, $p_\theta(v\mid h,s_r)=p_\theta(v\mid h,s_0)\,e^{u_{r,\theta}(h,v)}$: the role prompt exponentially reweights the module's default predictions, scaling up the tokens the role favors and scaling down the rest. We call this quantity a \emph{utility} because $p_\theta(\cdot\mid h,s_r)$ is provably the distribution that maximizes the expected $u_{r,\theta}$ under a KL penalty toward the neutral distribution: the module behaves as if it were optimizing this utility while staying close to its default behavior (proof and further properties in Appendix~\ref{app:tilt-derivation}). Reading a utility off the log-ratio of two distributions in this way is standard in reinforcement learning as inference~\citep{levine2018reinforcement,todorov2007linearly}, DPO~\citep{rafailov2023direct}, and maximum-entropy IRL~\citep{ziebart2008maximum}.

\paragraph{Role Drift.}
The role prompt thus equips each module with a utility that steers it toward its intended behavior. We define \textbf{Role Drift} as the failure mode in which $u_{r,\theta}$ shifts during end-to-end optimization: \textit{a module's role utility departs from its pre-training value, so the module no longer performs the job assigned by its role prompt, even though the compound system's terminal performance is preserved or improved.} In our retrieval-augmented reader, this means the reader becomes less sensitive to retrieved evidence; in our decomposer-solver pipeline, the decomposer begins to place answer information in its intermediate questions instead of decomposing the problem. Because terminal reward depends only on the final output, it cannot detect changes in $u_{r,\theta}$: two systems with identical terminal accuracy can have entirely different role utilities. Role Drift is thus an erosion of \emph{role fidelity}, the degree to which a module's behavior conforms to its assigned role, and we measure it with task-specific probes (Section~\ref{sec:results}).

\paragraph{Role Anchor loss.}
\label{sec:method}
To control Role Drift, we anchor the role utility to its pre-RL value. The comparison should ignore one kind of change: adding the same constant to every token's utility leaves the relative preferences, and hence the reweighted distribution after normalization, unchanged, so a uniform shift of $u_{r,\theta}(h,\cdot)$ says nothing about role behavior. We therefore compare mean-centered role utilities over a candidate token set $\mathcal{C}$, taken as the top-$k$ tokens under the current model, which is where the module places most of its probability mass:
\begin{equation}
\bar u_{r,\theta}(h,v) \;=\; u_{r,\theta}(h,v) \;-\; \tfrac{1}{|\mathcal{C}|}\textstyle\sum_{v'\in\mathcal{C}}u_{r,\theta}(h,v').
\label{eq:delta-bar}
\end{equation}
After centering, the sign of $\bar u_{r,\theta}(h,v)$ says whether the role prompt favors $v$ more or less than its average effect across the candidates. We give a formal analysis of why centering is necessary in Appendix~\ref{app:centering}: without it, the anchor would also penalize uniform shifts, which arise whenever RL improves the module's default predictions without touching its role behavior, and the loss would then fight legitimate task learning. We freeze the pre-RL model as a reference and compute $\bar u_{r,\mathrm{ref}}$ from it using the same prompts and the same candidate set $\mathcal{C}_t$. Role Anchor penalizes the squared difference between the current and reference centered role utilities, averaged over token positions $t$:
\begin{equation}
\mathcal{L}_{\mathrm{role}}(\theta) \;=\; \mathbb{E}_{t}\Big[\tfrac{1}{|\mathcal{C}_t|}\textstyle\sum_{v\in\mathcal{C}_t}\big(\bar u^{(t)}_{r,\theta}(v) - \bar u^{(t)}_{r,\mathrm{ref}}(v)\big)^2\Big].
\label{eq:anchor}
\end{equation}
The intuition is that throughout training, the role prompt should keep nudging tokens in the same directions, with similar strength, as it did before RL. The module's absolute predictions are free to change as it gets better at its task; what must be preserved is the extra nudge from swapping the neutral prompt for the role prompt. If that nudge fades, the role prompt has become irrelevant to the module's behavior, which is exactly Role Drift; if it grows far beyond its original magnitude, the role prompt now alters behavior much more than it was designed to, which is rarely what training should produce. The penalty is added to the policy-gradient objective as $\max_\theta \mathbb{E}[R(y)] - \lambda \sum_{i\in V_R} \mathcal{L}_{\mathrm{role}}^{(i)}(\theta)$, where $R(y)$ is the terminal reward and $\lambda$ controls the trade-off between task performance and role preservation.

\begin{proposition}[Role utility preservation]
\label{prop:preservation}
With the role utility defined in Eq.~\ref{eq:tilt}, $\mathcal{L}_{\mathrm{role}}(\theta) = 0$ if and only if the trained model's role utility equals the reference's up to a position-dependent constant: $u_{r,\theta}(h,v) = u_{r,\mathrm{ref}}(h,v) + c(h)$ for all $v, h$. (Proof in Appendix~\ref{app:prop-proof}.)
\end{proposition}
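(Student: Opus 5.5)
The proof works pointwise: the loss vanishes exactly when each per-position residual vanishes, and at a fixed position a vector has zero centered part exactly when it is constant. I will treat the identity ``for all $v,h$'' as ranging over the token positions $t$ (histories $h_t$) in the support of $\mathbb{E}_t$ and over the candidate sets $\mathcal{C}_t$ on which the loss is evaluated. Fix a position $t$ with history $h=h_t$ and candidate set $\mathcal{C}=\mathcal{C}_t$. Define the difference of raw role utilities $d(h,v)=u_{r,\theta}(h,v)-u_{r,\mathrm{ref}}(h,v)$. Centering in Eq.~\ref{eq:delta-bar} is a linear operation, and the reference is computed on the same candidate set $\mathcal{C}_t$. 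Hence
\[
\bar u^{(t)}_{r,\theta}(v)-\bar u^{(t)}_{r,\mathrm{ref}}(v)=d(h,v)-\tfrac{1}{|\mathcal{C}|}\textstyle\sum_{v'\in\mathcal{C}}d(h,v').
\]
So the integrand of Eq.~\ref{eq:anchor} at position $t$ equals $\frac{1}{|\mathcal{C}|}\sum_{v\in\mathcal{C}}(d(h,v)-\bar d(h))^2$, where $\bar d(h)$ is the mean of $d(h,\cdot)$ over $\mathcal{C}$. This is the empirical variance of $d(h,\cdot)$ over $\mathcal{C}$.

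For the ``if'' direction, suppose $u_{r,\theta}=u_{r,\mathrm{ref}}+c(h)$. Then $d(h,\cdot)\equiv c(h)$ is constant in $v$, so $d(h,v)-\bar d(h)=0$ for every $v\in\mathcal{C}$. Every per-position term is therefore zero, and so is the expectation.

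For the ``only if'' direction, suppose $\mathcal{L}_{\mathrm{role}}(\theta)=0$. Each per-position term is a nonnegative sum of squares, so the expectation is a nonnegative average. It can vanish only if every term with positive weight vanishes. For a finite set of sampled positions, this means every term in the average vanishes. For a general distribution over $t$, it means the terms vanish almost surely, and I would state the conclusion modulo null sets. A sum of squares is zero only if each square is zero, so $d(h,v)=\bar d(h)$ for all $v\in\mathcal{C}_t$. Setting $c(h):=\bar d(h)$, which depends only on the position, gives $u_{r,\theta}(h,v)=u_{r,\mathrm{ref}}(h,v)+c(h)$.

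The main obstacle is scope rather than algebra. The loss only sees tokens in the top-$k$ candidate sets $\mathcal{C}_t$ and the positions in the support of $\mathbb{E}_t$. For the ``only if'' direction, the literal reading ``for all $v$'' over the full vocabulary $\mathcal{V}$ therefore holds only when $\mathcal{C}_t=\mathcal{V}$ (full-vocabulary centering). In the top-$k$ case, outside $\mathcal{C}_t$ the loss gives no constraint. I would make this explicit: the equivalence is stated on the candidate sets actually used, and it becomes exact over $\mathcal{V}$ when $k=|\mathcal{V}|$.

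A second subtlety is that $\mathcal{C}_t$ is chosen as the top-$k$ under the \emph{current} model. I would note that the argument never uses how $\mathcal{C}_t$ is chosen. It only uses that the current and reference utilities are centered over the same set, which Eq.~\ref{eq:anchor} guarantees.
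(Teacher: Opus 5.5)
Your proof is correct and follows essentially the paper's argument. The paper also writes the centered difference as $u_{r,\theta}-u_{r,\mathrm{ref}}-(m_\theta(h)-m_{\mathrm{ref}}(h))$, where $m$ is the mean over $\mathcal{C}$, and takes $c(h)=m_\theta(h)-m_{\mathrm{ref}}(h)$, which is your $\bar d(h)$; your variance formulation simply spells out the nonnegativity step. Your scope caveat is justified rather than a gap: the paper's ``only if'' step goes straight from $\mathcal{L}_{\mathrm{role}}(\theta)=0$ to equality ``for all $v,h$,'' yet the loss only constrains $v\in\mathcal{C}_t$ at positions in the support of $\mathbb{E}_t$, so the full-vocabulary claim needs $\mathcal{C}_t=\mathcal{V}$, as you note.
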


The constant $c(h)$ is the uniform shift that centering discards: absorbed by renormalization, it never changes the module's predictions. Proposition~\ref{prop:preservation} thus says that driving $\mathcal{L}_{\mathrm{role}}$ to zero pins down the role-induced relative preferences while leaving absolute predictions free, so RL can keep improving task accuracy. The premise of Role Anchor is that the role prompt's effect on the pre-RL model is a faithful proxy for the intended role; when this holds, the anchor supervises each module without any task-specific probe design. We verify the premise with an inference-only check before training: the role prompt must produce meaningfully more role-faithful behavior than the neutral prompt on the drift indicator, and should be redesigned otherwise; both benchmarks pass (Appendix~\ref{app:calibration}). Role Anchor is conceptually close to context distillation~\citep{snell2022learning,askell2021general}, which absorbs the prompt's effect into the weights and discards the prompt; it instead preserves the prompt's differential effect ($\bar u_{r,\theta} \approx \bar u_{r,\mathrm{ref}}$) and keeps the prompt alive at inference.

%% file: sections/04_experiments.tex
\section{Experiments}
\label{sec:experiments}

\subsection{Experimental Setup}
\label{sec:setup}

We instantiate Role Drift on two compound pipelines. For each, we describe the task and dataset, the system architecture, and the drift pattern we aim to detect. Probes that quantify each drift pattern are introduced alongside results below.

\paragraph{RAG.}
The first pipeline performs retrieval-augmented question answering: given a yes/no question, the system must retrieve supporting evidence from a Wikipedia corpus and answer from that evidence. It has three components:
\emph{QueryGen} rewrites the user question into a search query,
\emph{Retriever} (a frozen term-matching retriever) retrieves a supporting passage from the corpus, and
\emph{Reader} produces a yes/no answer conditioned on both the question and the retrieved passage.
QueryGen and Reader are both \texttt{Qwen2.5-3B-Instruct}~\citep{qwen2024technical}.
The Reader's assigned role is to answer \emph{from the retrieved evidence} rather than from the question alone or its parametric prior. We train and evaluate on HotpotQA~\citep{yang2018hotpotqa}.

\textit{Drift pattern.}\quad
The Reader becomes less responsive to the retrieved passage:
its answer ceases to change when the supporting passage is swapped to one implying the opposite answer. The drift arises because the Reader has two answer sources, the retrieved passage and its own parametric memory, and the passage, coming from an imperfect upstream retriever, is sometimes wrong. Answering from memory is then often the safer bet, and since the reward checks only the final answer, outcome-only RL steadily reinforces the memory pathway.

\paragraph{Decomposer--Solver (DEC).}
The second pipeline performs multi-hop question answering by explicit decomposition: a complex question is split into simpler sub-questions that are answered in sequence. It has two components:
\emph{Decomposer} receives the question and supporting passages and produces a sequence of abstract sub-questions;
\emph{Solver} answers each sub-question in order and returns the final answer.
Decomposer is \texttt{Qwen2.5-7B-Instruct} and Solver is \texttt{Qwen2.5-0.5B-Instruct}.
We deliberately choose a much weaker Solver: this mirrors practical workflows in which a capable model plans or decomposes while a smaller model executes each sub-task, and it amplifies drift pressure because the Decomposer can easily absorb the Solver's role.
The Decomposer's assigned role is to \emph{decompose} the reasoning while leaving the task-solving entirely to the Solver.
We train and evaluate on MuSiQue-Ans~\citep{trivedi2022musique}.

\textit{Drift pattern.}\quad
The Decomposer encodes answer information directly into the sub-questions (e.g.\ embedding the gold-answer entity), pre-empting the computation the Solver is intended to perform. The leakage arises because the reward scores only the final answer and the Solver is far weaker than the Decomposer: abstract sub-questions often exceed the Solver's ability, while embedding the answer turns it into a copy mechanism that the reward cannot distinguish from genuine solving, so leaking raises reward immediately.

\paragraph{Training.}
All trainable modules use independent LoRA adapters~\citep{hu2022lora} (rank~16).
Both pipelines are optimized with REINFORCE~\citep{williams1992simple} and a group baseline
($k{=}8$ for RAG, $k{=}4$ for DEC;
learning rates $2{\times}10^{-5}$ and $1{\times}10^{-5}$;
10 RL epochs; 3 seeds each).
RAG initializes from a 3-epoch SFT checkpoint;
DEC trains directly from the base model.
The core comparison is \emph{RL alone} (no anchor) vs.\ \emph{RL\,+\,Role Anchor},
with $\lambda$ sweeps over $\{0.02\text{--}0.10\}$ (RAG) and $\{0.05\text{--}1.0\}$ (DEC).
Full hyperparameters and prompts appear in Appendix~\ref{app:hyper}.

%% file: sections/05_results.tex
\subsection{Main Results}
\label{sec:results}

We evaluate both pipelines under outcome-only RL (no anchor) and RL with Role Anchor, using three seeds each. For each pipeline, we track both terminal accuracy and a task-specific role-fidelity probe: evidence-following accuracy for RAG and answer-entity insertion rate for DEC. Figure~\ref{fig:drift-combined} summarizes the results; Table~\ref{tab:cost} reports the numerical values at matched operating points.

\begin{figure}[t]
\centering
\includegraphics[width=\linewidth]{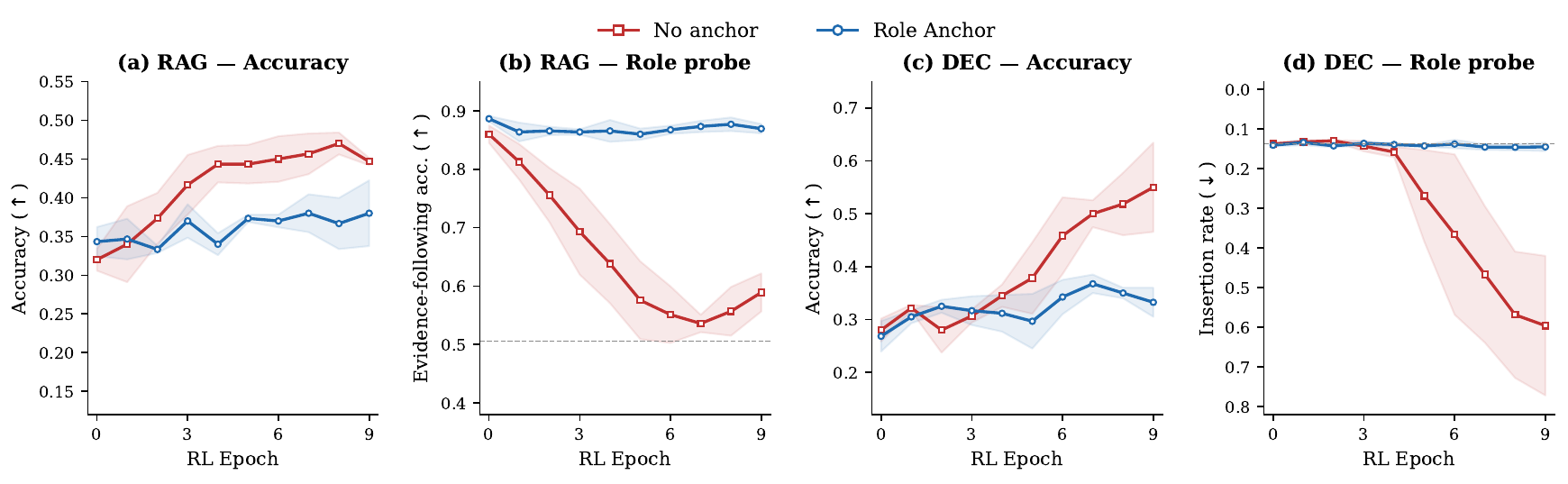}
\caption{\textbf{Outcome-only RL improves accuracy while eroding role fidelity; Role Anchor restores it.}
(a,\,c) Terminal accuracy; (b,\,d) role-fidelity probes. Red: no anchor; blue: Role Anchor.
Without anchor, the RAG Reader stops following retrieved evidence (b) and the DEC Decomposer begins embedding answers into sub-questions (d), even as accuracy rises (a,\,c).
Role Anchor keeps both probes near pre-RL levels. The accuracy gap between arms is the price of preserving the intended division of labor. Arrows mark the better direction; the $y$-axis in (d) is inverted so that higher is better in every panel.}
\label{fig:drift-combined}
\end{figure}

\paragraph{Outcome-only RL induces Role Drift invisible to terminal accuracy.}
\label{sec:drift-exists}

On both pipelines, terminal accuracy improves steadily under outcome-only RL (Figure~\ref{fig:drift-combined}a,\,c). Yet the role-fidelity probes tell a different story.

For RAG, we test whether the Reader still relies on retrieved evidence by swapping the supporting passage to one implying the opposite answer. For instance, for ``Are Mount Tamalpais and Mount Diablo in the same state?'', replacing the passage that places both mountains in California with one placing Mount Diablo in Nevada flips the gold answer from \emph{yes} to \emph{no}.
We call the resulting metric \emph{evidence-following accuracy}: a Reader that genuinely uses the passage should change its answer when the evidence changes (full example in Appendix~\ref{app:rag-example}). Under outcome-only RL, evidence-following accuracy drops from $0.86$ to $0.54$ (Figure~\ref{fig:drift-combined}b), nearly reaching the $0.506$ chance floor of a Reader that ignores passages entirely. The decline is gradual and monotonic, suggesting that the Reader progressively shifts from evidence-based to memory-based answering as RL proceeds.

For DEC, we test whether the Decomposer respects its assigned role by measuring the \emph{answer-entity insertion rate}: the fraction of sub-questions that contain the gold-answer entity, indicating that the Decomposer is solving the task itself rather than delegating to the Solver (concrete example in Appendix~\ref{app:dec-example}). Under outcome-only RL, the insertion rate rises from $0.14$ to $0.60$ (Figure~\ref{fig:drift-combined}d). Unlike RAG's gradual decline, DEC exhibits an abrupt phase transition after epoch~4, where the insertion rate triples in a single epoch. This suggests that once RL discovers the leakage shortcut, it rapidly commits to it. The pattern is consistent across all seeds, with the insertion rate increasing by a factor of two to five.

The two pipelines thus exhibit qualitatively different drift dynamics (gradual erosion on RAG versus abrupt onset on DEC), but the same structural pattern: terminal accuracy improves while role fidelity degrades, and the degradation is not captured by terminal evaluation alone.

\paragraph{Role Anchor recovers role-following behavior.}
\label{sec:results-anchor}

Under Role Anchor (blue in Figure~\ref{fig:drift-combined}), both probes remain flat throughout training: the RAG Reader continues to follow retrieved evidence ($0.87$, Figure~\ref{fig:drift-combined}b), and the DEC Decomposer maintains its pre-RL insertion rate (${\sim}0.14$, Figure~\ref{fig:drift-combined}d). Crucially, the modules still improve at the task (Figure~\ref{fig:drift-combined}a,\,c), indicating that Role Anchor does not freeze learning but channels it through the intended roles.

Two additional probes corroborate the picture (Table~\ref{tab:cost}). A \emph{random-passage probe} on RAG replaces the retrieved passage with an unrelated one: the anchor Reader's accuracy drops to $0.03$ (correctly refusing to answer without evidence), while the no-anchor Reader retains $0.19$, confirming it has learned to answer from parametric memory regardless of what the passage says. A \emph{passage-removal probe} on DEC removes all passages: accuracy collapses to near zero for both arms ($0.05$ and $0.08$), confirming that both systems depend on passages. The difference is \emph{how} they use them: the unanchored Decomposer extracts answers and embeds them in sub-questions; the anchored Decomposer writes abstract sub-questions and lets the Solver reason over the passages.

\begin{table}[t]
\small
\centering
\begin{tabular}{@{}llcc@{}}
\toprule
Pipeline & Metric & No anchor & Role Anchor \\
\midrule
RAG & Accuracy & $0.447$ & $0.380$ \\
    & Evidence-following acc. & $0.589$ & $0.869$ \\
    & Random-passage acc.\ & $0.187$ & $0.030$ \\
\addlinespace[4pt]
DEC & Accuracy & $0.550$ & $0.297$ \\
    & Insertion rate & $0.596$ & $0.143$ \\
    & Acc.\ without passages & $0.050$ & $0.080$ \\
\bottomrule
\end{tabular}
\caption{\textbf{Accuracy and role-fidelity probes} at epoch~9 (3-seed mean).
\emph{Evidence-following acc.}: Reader accuracy after swapping the passage to one implying the opposite answer (higher means the Reader follows evidence).
\emph{Random-passage acc.}: Reader accuracy when the passage is replaced with an unrelated one (lower means the Reader relies on evidence, not memory).
\emph{Insertion rate}: fraction of Decomposer sub-questions containing the gold-answer entity (lower means the Decomposer respects its assigned role).
\emph{Acc.\ without passages}: accuracy when passages are removed (near-zero for both arms suggests accuracy is passage-mediated).}
\label{tab:cost}
\end{table}

\paragraph{The accuracy cost of role preservation.}
\label{sec:rq4}

Preserving role fidelity comes at an accuracy cost that varies markedly between the two pipelines (Table~\ref{tab:cost}). On RAG, the cost is modest: $-0.067$ in accuracy for a $+0.280$ gain in evidence-following. The anchored Reader still improves over the pre-RL model; it simply improves less than the unanchored arm, and the answers it produces are genuinely grounded in retrieved evidence rather than recalled from parametric memory, a property that matters whenever the retrieval corpus is updated or the question falls outside the model's training distribution.

On DEC, the cost is large, but this is itself the key finding. Unanchored RL improves accuracy by $+0.310$ above the base, but under Role Anchor the improvement is only $+0.057$: across seeds, $86\% \pm 19\%$ of the apparent RL gain vanishes under role constraints. This asymmetry between RAG and DEC reflects a difference in how much ``genuine'' learning each pipeline has beyond the drift shortcut. RAG's Reader has a legitimate improvement pathway (learning to extract answers from evidence more effectively), so most of its gain survives the anchor. DEC's Decomposer, paired with a much weaker 0.5B Solver, has little room for genuine improvement and relies heavily on bypassing the Solver. The accuracy cost is not a limitation of Role Anchor but a diagnostic: it quantifies how much of the system's improvement depends on role violation.

\subsection{How Does Role Anchor Mitigate Drift?}
\label{sec:mechanism}

A natural hypothesis is that Role Anchor mitigates drift by shrinking parameter updates.
The gradient geometry reveals a more nuanced picture
that also illuminates why drift affects the two pipelines differently
(Figure~\ref{fig:gradient}).

\begin{figure}[t]
\centering
\includegraphics[width=\linewidth]{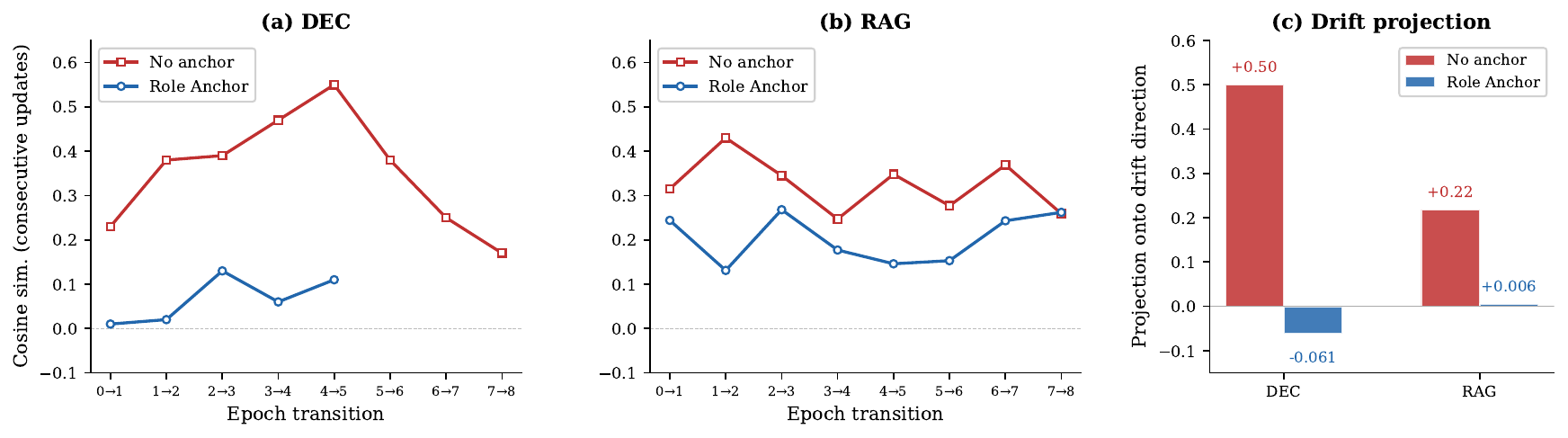}
\caption{\textbf{Role Anchor reduces alignment with the drift direction rather than suppressing learning overall.}
(a,\,b) Cosine similarity between consecutive LoRA updates over training epochs. Without anchor (red), updates cohere along a consistent direction; with anchor (blue), coherence drops sharply on DEC ($0.36 \to 0.07$) but only modestly on RAG ($0.32 \to 0.20$), reflecting the different amounts of legitimate learning available beyond the drift shortcut.
(c) Projection of each method's updates onto the drift direction. Unanchored updates align positively with drift (DEC: $+0.50$, RAG: $+0.22$); anchored updates project to near zero (DEC: $-0.06$, RAG: $+0.006$).}
\label{fig:gradient}
\end{figure}

We extract LoRA adapter weights at each epoch and compute the cosine similarity
between consecutive update vectors $\Delta\theta_t = \theta_t - \theta_{t-1}$.
On DEC (Figure~\ref{fig:gradient}a),
the unanchored Decomposer's updates show a characteristic arc:
coherence rises from $0.23$ to a peak of $0.55$ around epoch~4--5,
then decays as drift saturates.
This peak coincides with the onset of the insertion-rate surge (Figure~\ref{fig:drift-combined}d),
suggesting that directional commitment in parameter space precedes and drives task-level drift.
Under Role Anchor, DEC's cosine drops to $\sim\!0.07$,
nearly indistinguishable from random high-dimensional vectors.
This near-zero coherence is consistent with the main results:
$86\%$ of DEC's unanchored accuracy gain depended on drift (Section~\ref{sec:rq4}),
so once drift is removed, the model has little consistent direction left to follow.

RAG tells a complementary story (Figure~\ref{fig:gradient}b).
Without anchor, updates fluctuate around $0.32$;
with anchor, coherence decreases to $0.20$, a smaller reduction than DEC.
But this is not a weakness of anchor on RAG.
It reflects the asymmetry of Section~\ref{sec:results}: the Reader keeps a legitimate improvement pathway after drift is removed.
This is consistent with RAG's modest accuracy cost ($-0.067$, Table~\ref{tab:cost}):
the anchor Reader keeps learning, just not through the parametric-prior shortcut.

To test whether the anchor specifically blocks the drift direction
rather than reducing learning in general,
we define a \emph{drift direction} as the mean unanchored update vector
during peak drift epochs (ep4--6 for DEC, ep2--6 for RAG)
and project each method's per-epoch updates onto it (Figure~\ref{fig:gradient}c).
Unanchored updates align with the drift direction ($+0.50$ on DEC, $+0.22$ on RAG),
while anchored updates project to approximately zero
($-0.06$ on DEC, $+0.006$ on RAG).
On both pipelines, Role Anchor appears to reduce the alignment of parameter updates with the drift direction,
allowing continued task learning
with less directional commitment to the role-violating shortcut.

\subsection{Ablation Study}
\label{sec:ablation}

We ablate the anchor strength $\lambda$ to understand the accuracy--role-fidelity trade-off (Figure~\ref{fig:lambda-sweep}).

\begin{figure}[t]
\centering
\includegraphics[width=\linewidth]{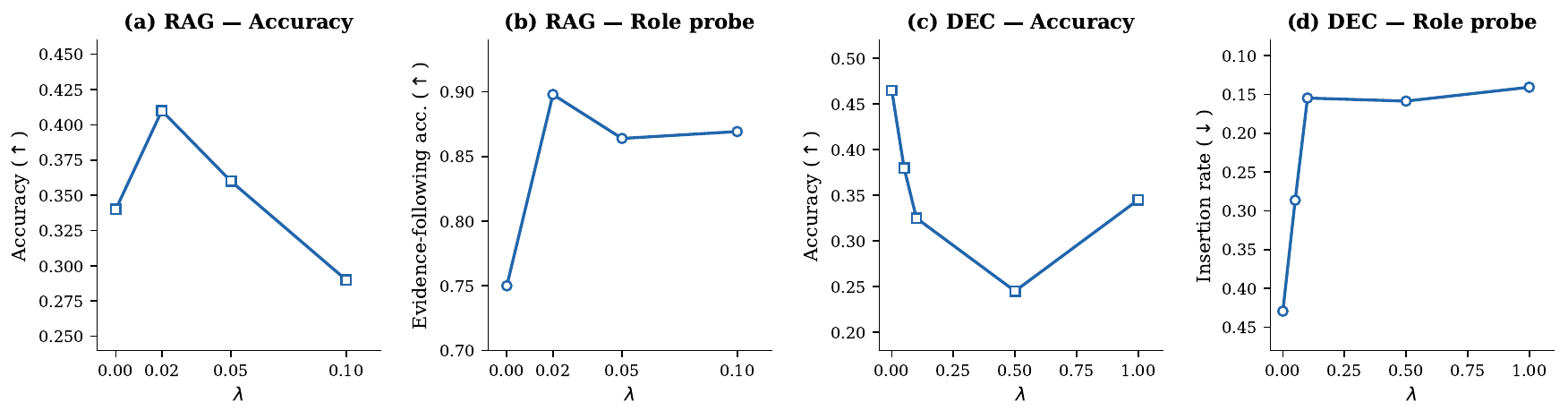}
\caption{\textbf{A small $\lambda$ suffices to suppress most drift.} The anchor strength $\lambda$ provides a continuous trade-off between accuracy (a,\,c) and role fidelity (b,\,d). On RAG, even $\lambda{=}0.02$ closes most of the evidence-following gap. On DEC, insertion rate decreases nearly monotonically with $\lambda$. Arrows mark the better direction; the $y$-axis in (d) is inverted so that higher is better in every panel.}
\label{fig:lambda-sweep}
\end{figure}

One surprising finding is that a small $\lambda$ already captures most of the role-fidelity benefit. On RAG, $\lambda{=}0.02$ raises evidence-following accuracy from $0.75$ to $0.90$ (Figure~\ref{fig:lambda-sweep}b) while also improving task accuracy to $0.41$, the highest value across all tested settings including the unanchored baseline ($0.34$). This suggests that mild role enforcement can help rather than hurt performance when the drift shortcut (parametric memory) is noisier than the intended pathway (passage evidence). Beyond $\lambda{=}0.02$, evidence-following saturates while accuracy declines.

On DEC, the trade-off is qualitatively different: accuracy decreases in the mid-range ($\lambda{=}0.10$--$0.50$) but partially recovers at $\lambda{=}1.00$ ($0.35$), while insertion rate drops nearly monotonically (Figure~\ref{fig:lambda-sweep}d). The recovery at high $\lambda$ suggests that a sufficiently strong anchor forces the Decomposer to discover a legitimate decomposition strategy that intermediate settings do not fully commit to.

The contrast between the two pipelines reinforces the diagnostic value of the $\lambda$ sweep: on RAG, where legitimate learning exists beyond the shortcut, even a small anchor yields a favorable trade-off; on DEC, where most of the gain is drift-dependent, stronger anchoring is needed and the accuracy cost is correspondingly larger.

\subsection{Discussion: Why Prevent Role Drift?}
\label{sec:discussion}
\label{sec:extensions}

Across both pipelines, Role Drift improved terminal accuracy, but it did so by departing from the implicit role utility that the designer chose when partitioning the system. The experiments let us state concretely what that departure costs.

On RAG, the Reader's role is to answer from retrieved evidence. When the passage and parametric memory disagree, which source to trust is a deployment decision, not an epistemic fact, and on the training distribution memory is often just as accurate, which is why outcome-only RL drifts toward it. The retrieval stage exists for the cases where memory fails: the corpus is updated, the fact is stale, the question concerns entities outside pretraining. Under the contract that the corpus, not memory, is authoritative, the evidence-swap probe is accuracy after the world changes: the unanchored Reader's $0.59$ against the anchored $0.87$ (Table~\ref{tab:cost}) is the gap that opens once corpus and memory diverge, and a drifted Reader makes the entire retrieval pipeline functionally irrelevant.

On DEC, the Decomposer's role is to direct the Solver by producing abstract sub-questions, not to solve the task itself. A key motivation for decomposition is efficiency: a powerful Decomposer plans the work while one or more Solvers execute sub-tasks in parallel. When the Decomposer absorbs the Solver's role by embedding answer entities into sub-questions, this parallelization benefit disappears because the Decomposer is doing all the work, and the Solver is reduced to a copy mechanism. Beyond efficiency, auditability suffers: downstream stakeholders cannot verify whether the system genuinely reasoned through the problem or short-circuited it.

In both cases, the accuracy gain from drift is fragile: it depends on a task-specific shortcut that may not transfer to new downstream modules or out-of-distribution queries. The role is a property of the deployment, not an arbitrary constraint. We do not claim Role Anchor improves terminal accuracy in general; rather, it makes Role Drift measurable and tunable, letting practitioners choose a position on the frontier between role fidelity and accuracy based on their deployment requirements.

%% file: sections/06_conclusion.tex
\section{Conclusion}
\label{sec:conclusion}

We identify Role Drift as a failure mode of compound-system RL in which modules deviate from their assigned roles while terminal accuracy continues to improve. On two pipelines with different architectures and drift patterns, task-specific probes reveal role-fidelity degradation that standard accuracy evaluation does not capture. We propose Role Anchor, a regularizer that preserves each module's role-prompt-induced behavior during training. Its anchor strength $\lambda$ provides a continuous trade-off between accuracy and role fidelity, and the accuracy cost under anchoring serves as a diagnostic for how much of the system's improvement depends on role violation.

Gradient analysis suggests that Role Anchor works by reducing the alignment of parameter updates with the drift direction rather than by slowing learning overall. This is consistent with the observation that anchored systems continue to improve at the task through their intended roles. We believe these findings point to a broader principle: when optimizing compound systems end-to-end, it is worth evaluating not only whether the final answer improves, but whether the improvement arrives through the module pathway the system was designed around.

\section{Limitations and Future Work}

Role Anchor, as formulated in this paper, requires log-probability access under both role and neutral prompts, a frozen reference model, and trainable weights for backpropagation. It therefore does not apply to API-only modules, components without a probabilistic output (e.g., a term-matching retriever), or prompt-optimized systems in which the prompt itself is trained. The underlying ideas, however, are not specific to LLMs. Role Drift can arise in any modular system trained end-to-end on a terminal reward: a vision module trained jointly with a downstream controller on task success can drift from estimating object positions to emitting whatever coordinates make that controller succeed. Role Anchor extends conceptually to such systems whenever the module's role is expressed through a conditioning input that can be ablated in place of the prompt, as in goal-conditioned policies or task-conditioned perception; instantiating the anchor for non-LLM modules is a natural direction for future work.

%% file: sections/90_appendix.tex
\section{Theoretical Details}
\label{app:theorem-proof}

\subsection{Properties of the Role Utility}
\label{app:tilt-derivation}

\paragraph{Well-definedness.}
Each module's distributions arise from a softmax, so $p_\theta(v\mid h,s)>0$ for every token and the log-ratio in Eq.~\ref{eq:tilt} is finite. The role utility $u_{r,\theta}$ is therefore defined for all $(h,v)$, with no assumption required.

\paragraph{Uniqueness up to a constant.}
The role utility is identified only up to an additive per-context constant: a function $u$ satisfies $p_\theta(v\mid h,s_r)\propto p_\theta(v\mid h,s_0)\,e^{u(h,v)}$ if and only if $u=u_{r,\theta}+c(h)$ for some constant $c(h)$, since adding $c(h)$ rescales the right-hand side by $e^{c(h)}$, which renormalization absorbs. Only differences $u_{r,\theta}(h,v)-u_{r,\theta}(h,v')$ carry behavioral meaning. This is the same equivalence-class structure under which a reward is recovered from a policy in DPO~\citep{rafailov2023direct} and a reward is identified in maximum-entropy IRL~\citep{ziebart2008maximum}.

\paragraph{Why ``utility''.}
The name has revealed-preference content: the role-prompted distribution is exactly the behavior of an agent that soft-maximizes $u_{r,\theta}$ while staying close to its neutral behavior,
\begin{equation}
p_\theta(\cdot\mid h,s_r)\;=\;\argmax_{q\,\in\,\Delta(\mathcal{V})}\;\;\mathbb{E}_{v\sim q}\big[u_{r,\theta}(h,v)\big]\;-\;\mathrm{KL}\big(q\,\big\|\,p_\theta(\cdot\mid h,s_0)\big).
\label{eq:variational}
\end{equation}
\emph{Proof.} For any $q\in\Delta(\mathcal{V})$,
$\mathbb{E}_q[u_{r,\theta}]-\mathrm{KL}(q\|p_0)
=\mathbb{E}_q\!\big[\log\tfrac{p_r}{p_0}-\log\tfrac{q}{p_0}\big]
=-\mathrm{KL}(q\,\|\,p_r)$,
which is uniquely maximized at $q=p_r$. \hfill$\square$

The role prompt therefore behaves exactly like an objective that the module trades off against a KL budget around its default behavior, in the same sense in which a language model is ``secretly a reward model''~\citep{rafailov2023direct}. We emphasize that Eq.~\ref{eq:variational} is an interpretation of the definition, not an additional assumption: it holds for any pair of shared-support distributions, and nothing downstream depends on it.

\paragraph{What the definition does not imply.}
Eq.~\ref{eq:tilt} defines $u_{r,\theta}$ from the current model, so it holds at every checkpoint by construction. It does not imply that $u_{r,\theta}$ is preserved across training: an RL update changes $p_\theta$ under both prompts and can change $u_{r,\theta}$ substantially while the definition remains valid at each checkpoint. The Role Anchor loss measures precisely this change; it is not zero by definition.

\subsection{Why Mean-Centering}
\label{app:centering}

\paragraph{Invariance.}
The centered utility $\bar u_{r,\theta}$ of Eq.~\ref{eq:delta-bar} is invariant to adding any constant $c(h)$ to $u_{r,\theta}(h,\cdot)$: it depends only on the equivalence class identified above, and is its unique mean-zero representative on $\mathcal{C}$.

\paragraph{What the uncentered loss would add.}
Let $g(h,v)=u_{r,\theta}(h,v)-u_{r,\mathrm{ref}}(h,v)$ and $\bar g(h)=|\mathcal{C}|^{-1}\sum_{v\in\mathcal{C}}g(h,v)$. Then
\begin{equation}
\tfrac{1}{|\mathcal{C}|}\textstyle\sum_{v\in\mathcal{C}}g(h,v)^2
\;=\;\tfrac{1}{|\mathcal{C}|}\textstyle\sum_{v\in\mathcal{C}}\big(g(h,v)-\bar g(h)\big)^2\;+\;\bar g(h)^2,
\end{equation}
so anchoring the raw utilities equals the Role Anchor loss plus the squared mean shift $\bar g(h)^2$. The mean shift carries no information about role behavior: a uniform change of $u_{r,\theta}(h,\cdot)$ across tokens leaves the reweighted distribution unchanged after normalization. Yet it does carry gradients. Suppose RL sharpens the module's neutral distribution from $p_0$ to $p_0'$ while the role prompt applies the same multiplicative reweighting $\rho(v)$: the reference satisfies $u_{r,\mathrm{ref}}(h,v)=\log\rho(v)-\log Z(h)$ with $Z(h)=\sum_v p_0(v\mid h)\,\rho(v)$, the trained model satisfies $u_{r,\theta}(h,v)=\log\rho(v)-\log Z'(h)$ with $Z'(h)=\sum_v p_0'(v\mid h)\,\rho(v)$, and therefore $g(h,\cdot)\equiv\log Z(h)-\log Z'(h)$ is constant in $v$. Role behavior is perfectly preserved and the Role Anchor loss is zero, but the uncentered loss equals $\big(\log\tfrac{Z(h)}{Z'(h)}\big)^2>0$ and its gradient pushes against the change in the neutral distribution, that is, against legitimate task learning. Mean-centering projects out this direction, which is what licenses the claim that only the role-induced relative preferences are anchored while absolute predictions remain free.

\paragraph{Behavior at and away from the reference.}
At $\theta=\theta_{\mathrm{ref}}$, $\bar u_{r,\theta}=\bar u_{r,\mathrm{ref}}$ and the loss is identically zero. If RL improves terminal performance without changing the role-relative preferences, the loss remains near zero; if it changes them, the loss grows in proportion to the change.

\paragraph{Implementation.}
$\mathcal{C}_t$ consists of the top-$k$ tokens under the current model at position $t$, and $\bar u_{r,\mathrm{ref}}$ is computed on the same $\mathcal{C}_t$, so the two centered utilities are compared pointwise on a shared set.

\subsection{Proof of Proposition~\ref{prop:preservation}}
\label{app:prop-proof}

\begin{proof}
Write $m_\theta(h)=|\mathcal{C}|^{-1}\sum_{v'\in\mathcal{C}}u_{r,\theta}(h,v')$, so that $\bar u_{r,\theta}(h,v)=u_{r,\theta}(h,v)-m_\theta(h)$ (Eq.~\ref{eq:delta-bar}).

\textbf{($\Rightarrow$)} If $\mathcal{L}_{\mathrm{role}}(\theta)=0$, then $\bar u_{r,\theta}(h,v)=\bar u_{r,\mathrm{ref}}(h,v)$ for all $v,h$. Expanding,
$u_{r,\theta}(h,v)-u_{r,\mathrm{ref}}(h,v)=m_\theta(h)-m_{\mathrm{ref}}(h)$ for all $v$. The right-hand side depends only on $h$, so setting $c(h)=m_\theta(h)-m_{\mathrm{ref}}(h)$ yields $u_{r,\theta}(h,v)=u_{r,\mathrm{ref}}(h,v)+c(h)$.

\textbf{($\Leftarrow$)} If $u_{r,\theta}(h,v)=u_{r,\mathrm{ref}}(h,v)+c(h)$, then $m_\theta(h)=m_{\mathrm{ref}}(h)+c(h)$, so $u_{r,\theta}(h,v)-m_\theta(h)=u_{r,\mathrm{ref}}(h,v)-m_{\mathrm{ref}}(h)$, giving $\bar u_{r,\theta}=\bar u_{r,\mathrm{ref}}$ and $\mathcal{L}_{\mathrm{role}}=0$.
\end{proof}
We therefore treat the loss as a soft constraint: under finite $\lambda$ and a terminal reward that conflicts with role preservation, the optimum is an interior trade-off, not a zero-loss point.

\section{Pre-Training Calibration}
\label{app:calibration}

Before anchor training, we verify that the role prompt has a measurable effect on the base model by comparing its behavior under role and neutral prompts on the drift indicators defined in Table~\ref{tab:cost}. The pipelines are described in Section~\ref{sec:setup}. Table~\ref{tab:calibration} confirms that the role prompt improves role fidelity on both benchmarks relative to the neutral prompt, validating the reference model for use in Role Anchor.

\begin{table}[htbp]
\small
\centering
\begin{tabular}{lllrrl}
\toprule
Pipeline & Base model & Indicator & Role & Neutral & Margin \\
\midrule
RAG & Qwen2.5-3B-Instruct & Evidence-following acc.\ (higher is better) & 0.659 & 0.580 & $+0.079$ \\
RAG & Qwen2.5-3B-Instruct & Random-passage acc.\ (lower is better) & 0.205 & 0.272 & $-0.067$ \\
DEC & Qwen2.5-7B-Instruct & Insertion rate (lower is better) & 0.130 & 0.210 & $-0.080$ \\
\bottomrule
\end{tabular}
\caption{Pre-training calibration on the base model. The role prompt produces meaningfully more role-faithful behavior than the neutral prompt on all indicators. Role and neutral prompts are recorded verbatim in Appendix~\ref{app:hyper}.}
\label{tab:calibration}
\end{table}

\section{Hyperparameters and Prompts}
\label{app:hyper}

Table~\ref{tab:hyperparams} summarizes the key hyperparameters for both pipelines. Full experimental setup is in Section~\ref{sec:setup}.

\begin{table}[htbp]
\small
\centering
\begin{tabular}{lll}
\toprule
& RAG & DEC \\
\midrule
Base model & Qwen2.5-3B-Instruct & Qwen2.5-7B-Instruct (Decomposer) \\
& & Qwen2.5-0.5B-Instruct (Solver) \\
LoRA rank & 16 & 16 \\
LoRA alpha & 32 & 32 \\
Adapter sharing & Independent & Independent \\
SFT initialization & 3 epochs & None (base model) \\
RL algorithm & REINFORCE + group baseline & REINFORCE + group baseline \\
Rollouts per question ($k$) & 8 & 4 \\
Learning rate & $2 \times 10^{-5}$ & $1 \times 10^{-5}$ \\
RL epochs & 10 & 10 \\
Seeds & 42, 43, 44 & 42, 43, 44 \\
Anchor $\lambda$ (main result) & 0.05 & 1.00 \\
Anchor $\lambda$ (sweep) & \{0, 0.02, 0.05, 0.10\} & \{0, 0.05, 0.10, 0.50, 1.00\} \\
Anchor top-$k$ tokens & 5 & 5 \\
Evaluation set size & 176 (yes/no slice) & 200 (MuSiQue-Ans dev) \\
GPU & 1$\times$ A100 80\,GB & 1$\times$ A100 80\,GB \\
Approx.\ wall-clock per run & ${\sim}2$\,h (no anchor), ${\sim}2.5$\,h (anchor) & ${\sim}4$\,h (no anchor), ${\sim}4.7$\,h (anchor) \\
\bottomrule
\end{tabular}
\caption{Hyperparameters for the two pipelines.}
\label{tab:hyperparams}
\end{table}

\paragraph{Compute budget.}
All experiments were run on single NVIDIA A100 80\,GB GPUs. Each RAG run (10 RL epochs, 3B Reader + 3B QueryGen) takes approximately 2 hours without anchor and 2.5 hours with Role Anchor (${\sim}22\%$ overhead from two extra frozen-reference forward passes). Each DEC run (10 RL epochs, 7B Decomposer + 0.5B Solver) takes approximately 4 hours without anchor and 4.7 hours with Role Anchor (${\sim}18\%$ overhead). With 3 seeds $\times$ 2 conditions (anchor/no-anchor) $\times$ 2 pipelines, plus $\lambda$-sweep runs, the total compute for reported experiments is approximately 120 A100-hours. Preliminary and failed experiments roughly doubled this figure.

\subsection{Verbatim Prompts}

\paragraph{RAG Reader role prompt:}
\begin{verbatim}
You are a careful Reader. Use the retrieved passages to answer the
user's question. If the passages clearly support an answer, give that
answer; if they do not, say so. Do not rely on outside knowledge that
conflicts with the passages.
\end{verbatim}

\paragraph{RAG Reader neutral prompt:}
\begin{verbatim}
Answer the user's question. Format: a concise final answer.
\end{verbatim}

\paragraph{DEC Decomposer role prompt:}
\begin{verbatim}
You are a question decomposer for multi-hop questions. Given a complex
question, produce a numbered list of 2-4 simpler sub-questions whose
answers chain to answer the original. Example: Question: What is the
population of the country where Mount Everest is located? 1. In which
country is Mount Everest located? 2. What is the population of #1?
Compare: GOOD: "What is the population of #1?" BAD: "What is the
population of Nepal?" (do not write the answer entity from the
passages; use #N instead) Return 2-4 numbered sub-questions. Use #N
for chained answers.
\end{verbatim}

\paragraph{DEC Decomposer neutral prompt:}
\begin{verbatim}
You are a question decomposer for multi-hop questions. Given a complex
question, produce a numbered list of 2-4 simpler sub-questions whose
answers chain to answer the original. Use #N notation to reference the
answer of sub-question N (e.g. 'Where is #1 located?'). Output ONLY
the numbered list, no commentary.
\end{verbatim}

\paragraph{Anchor reference model.} We use the post-SFT checkpoint as the frozen anchor reference. In our LoRA setup, the reference is a frozen copy of the adapter loaded alongside the training adapter; no extra model copy is needed.

\section{Qualitative Examples}
\label{app:examples}

\subsection{RAG: Evidence-Following Probe Example}
\label{app:rag-example}

The following example illustrates the counterfactual evidence-swap probe used on the RAG pipeline. By swapping the retrieved passage to one supporting the opposite answer, we test whether the Reader follows the evidence or relies on parametric memory:

\begin{verbatim}
Question: Are Mount Tamalpais and Mount Diablo in the same state?

Original passages (supports "yes"):
  [1] Mount Tamalpais: Located in California.
  [2] Mount Diablo: Located in California.

Counterfactual passages (supports "no"):
  [1] Mount Tamalpais: Located in California.
  [2] Mount Diablo: Located in Nevada.

Role-faithful Reader:  no   (follows counterfactual evidence)
Role-drifted Reader:   yes  (ignores evidence, relies on memory)
\end{verbatim}

A role-faithful Reader changes its answer when the evidence changes; a drifted Reader answers from parametric memory regardless of what the passages say. The evidence-following accuracy reported in Table~\ref{tab:cost} measures how often the Reader's answer tracks the counterfactual evidence.

\subsection{DEC: Answer-Entity Insertion Example}
\label{app:dec-example}

The following example illustrates the answer-entity insertion pattern in the Decomposer-Solver pipeline. The Decomposer receives a multi-hop question and supporting passages, and should produce abstract sub-questions using \texttt{\#N} references:

\begin{verbatim}
Question: What other county does the county where Imperial is located
share a border with?

Decomposer output:
#1 Where is Imperial, Texas located?
#2 Which counties are located near Pecos County, Texas?
#3 Which other county does Pecos County, Texas share a border with?
\end{verbatim}

The gold entities are \emph{Pecos County} and \emph{Crockett County}. Sub-questions \#2 and \#3 contain \emph{Pecos County, Texas}, which the Decomposer resolved from the passages instead of preserving the abstract reference \texttt{\#1}. This is Role Drift in miniature: the Decomposer reads the answer from the passages and embeds it into the sub-questions, bypassing the Solver's intended role.